\newcommand{\BibTeX}{\textsc{B\kern-0.1emi\kern-0.017emb}\kern-0.15em\TeX}
\newcommand{\cg}[1]{{\cal{G}}^#1}
\newcommand{\given}{{ \; | \; }}
\newcommand{\V}{{\bf{V}}}
\newcommand{\sG}{\mathcal{G}}
\newcommand{\rsa}{{\rightsquigarrow}}
 \newcommand{\lsa}{{ \reflectbox{$\rightsquigarrow$}}}
\newcommand{\ra}{\rightarrow}
\newcommand{\lra}{\leftrightarrow}
\newcommand\independent{\protect\mathpalette{\protect\independenT}{\perp}} 
\def\independenT#1#2{\mathrel{\rlap{$#1#2$}\mkern2mu{#1#2}}}
\begin{document}

\FrameSep0pt

\title{Causal Discovery from Subsampled Time Series Data\\by Constraint Optimization}

\author{\name Antti Hyttinen \email antti.hyttinen@helsinki.fi\\
\addr HIIT, Department of Computer Science, University of Helsinki%, Finland  
\AND 
\name Sergey Plis \email s.m.plis@gmail.com\\
\addr Mind Research Network and University of New Mexico
\AND 
\name Matti J\"arvisalo \email matti.jarvisalo@cs.helsinki.fi \\
\addr HIIT, Department of Computer Science, University of Helsinki%, Finland
\AND 
\name Frederick Eberhardt \email fde@caltech.edu\\ 
\addr Humanities  and Social Sciences, California Institute of Technology
%Pasadena, CA, USA
\AND 
\name David Danks \email ddanks@cmu.edu\\
\addr Department of Philosophy, Carnegie Mellon University
}

\editor{}

\maketitle
%\aistatsaddress{HIIT, Dept. Computer Science\\University of Helsinki\\Finland \And The Mind Research Network \And HIIT, Dept. Computer Science\\ University of Helsinki\\Finland \And Humanities  and Social Sciences\\ California Institute of Technology\\ Pasadena, CA, USA\And Carnegie Mellon University } ]

\begin{abstract}

This paper focuses on causal structure estimation from time series data in which measurements are obtained at a coarser timescale than the causal timescale of the underlying system. Previous work has shown that such subsampling can lead to significant errors about the system's causal structure if not properly taken into account. In this paper, we first consider the search for the system timescale causal structures that correspond to a given measurement timescale structure. We provide a constraint satisfaction procedure whose computational performance is several orders of magnitude better than previous approaches. We then consider finite-sample data as input, and propose the first constraint optimization approach for recovering the system timescale causal structure. This algorithm optimally recovers from possible conflicts due to statistical errors. More generally, these advances allow for a robust and non-parametric estimation of system timescale causal structures from subsampled time series data.

\end{abstract}

\begin{keywords}
causality; causal discovery; graphical models; time series; constraint satisfaction; constraint optimization.
%We would like to encourage you to list your keywords here. They should be separated by semicolons and the last is followed by a full stop.
\end{keywords}

%, and that has a broader applicability

\section{Introduction}\label{sec:intro}

%\mcomment{The usual informal intro and motivations}

Time-series data has long constituted the basis for causal modeling in many fields of science \citep{GrangerCausality,hamilton1994time, lutkepohl2005new}.
%, but with the proliferation of cheap sensing technology it is now also becoming ubiquitous in the natural sciences. 
%Moreover, for any online-based activity, be it in social networks, internet marketing or body monitor recordings, time series are the standard data format.\acomment{? Do we really have to sell time series analysis?}
Despite the often very precise measurements %(or logs)
at regular time points, the underlying causal interactions that give rise to the measurements often occur at a much faster timescale than the measurement frequency. While information about time order is generally seen as simplifying causal analysis, time series data that undersamples the generating process can be misleading about the true causal connections \citep{dash,iwasaki}. 
For example, Figure~\ref{fig:dynamic}a shows the causal structure of a %time series
process unrolled over discrete time steps, and Figure~\ref{fig:dynamic}c shows the corresponding structure of the same process, obtained by marginalizing every second time step. If the subsampling rate is not taken into account, we might conclude that optimal control of $V_2$ requires interventions on both $V_1$ and $V_3$, when the influence of $V_3$ on $V_2$ is, in fact, completely mediated by $V_1$ (and so intervening only on $V_1$ suffices).

%\fcomment{I find those figures which show many graphs for the same G1 at different undersampling rates an excellent motivation}. 
Standard methods for estimating causal structure from time series either focus exclusively on estimating a transition model at the measurement timescale (e.g.,\ Granger causality \citep{GrangerCausality,granger1980testing}) or combine a model of measurement timescale transitions with so-called ``instantaneous'' or ``contemporaneous'' causal relations that (are supposed to) capture any interactions that are faster than the measurement process (e.g.,\ %structural vector autoregressive models 
SVAR) \citep{lutkepohl2005new,hamilton1994time,hyvarinen2010estimation}. In contrast, we follow \cite{Plis2015a,PlisDY:UAI2015} and \cite{Gong2015},
%I want the reviewer to note here that we are extending NIPS and UAI work
and explore the possibility of identifying (features of) the causal process at the true timescale from data that subsample this process.

In this paper, we provide an exact inference algorithm based on using a general-purpose Boolean constraint solver~\citep{DBLP:series/faia/2009-185,DBLP:journals/aicom/GebserKKOSS11}, and demonstrate that it is orders of magnitudes faster than the current state-of-the-art method by \cite{PlisDY:UAI2015}. %Took out Plis2015a, this is not correct here!
At the same time, our 
approach is much simpler and allows inference in more general settings.
We then show how the approach naturally integrates possibly conflicting results obtained from the data. Moreover, unlike the approach by \cite{Gong2015}, our method does not depend on a particular parameterization of the underlying model and scales to a more reasonable number of variables.

\section{Representation}\label{sec:representation}

%\scomment{Explain our representation}
%\acomment{I would go here directly to the rolled graph definition here. It is anyway used in the introduction into motivating examples.}

We assume that the system of interest relates a set of variables $\V^t = \{V_1^t,\ldots,V_n^t\}$ defined at discrete time points $t \in \mathbb{Z}$ with continuous ($\in \mathbb{R}^n$) or discrete ($\in \mathbb{Z}^n$) values \citep{DorisEntner}. We distinguish the representation of the true causal process at the \emph{system timescale} from the time series data that are obtained at the \emph{measurement timescale}. Following \cite{PlisDY:UAI2015}, we assume that the true between-variable causal interactions at the system timescale constitute a first-order Markov process; that is, that the independence $\mathbf{V}^t \independent \mathbf{V}^{t-k} |\mathbf{V}^{t-1}$ holds for all  $k > 1$. The parametric models for these causal structures are structural vector autoregressive (SVAR) processes or dynamic (discrete/continuous variable) Bayes nets. 
%Thus there are no direct lagged effects ($V_i^{t-k} \ra V_j^t$ for $k>1$)
Since the system timescale can be arbitrarily fast (and causal influences take time),  we assume that there is no ``contemporaneous'' causation of the form $V_i^t \ra V_j^t$~\citep{granger1988some}. We also assume that $\V^{t-1}$ contains all common causes of variables in $\V^{t}$. These assumptions jointly express the widely used causal sufficiency assumption (see \cite{sgs1993}) in the time series setting.

The system timescale causal structure can thus be represented by a causal graph $G^1$ %over $\V^{t-1} \cup \V^{t}$
%where any variable at time $t-1$ can be a direct cause of any variable at time $t$, i.e. 
consisting (as in a dynamic Bayes net) only of arrows of the form $V_i^{t-1} \rightarrow V_j^{t}$, where $i=j$ is permitted (see Figure~\ref{fig:dynamic}a for an example). Since the causal process is time invariant, the edges repeat through $t$.    %Consequently The first assumption only encodes the idea that causes precede effects (since the system time scale can be made arbitrarily fast).
%The latter two assumptions constitute a significant restriction of the model space that we adopt from the previous literature on this topic. They are similar to the widely used assumption of causal sufficiency. 
In accordance with %Plis et al.
\cite{PlisDY:UAI2015}, for any $G^1$ we use a simpler, rolled graph representation, denoted by $\cg{1}$, where $V_i \ra V_j \in \sG^1$ iff $V_i^{t-1} \ra V_j^{t} \in G^1$. Figure~\ref{fig:dynamic}b shows the rolled graph representation $\cg{1}$ of $G^1$ in Figure~\ref{fig:dynamic}a.

%In Section~\ref{} we will indicate how the present results can be generalized.

%We assume that the true causal process is well-represented by a first order Markov process in which the value for each $V_i(t)$ is drawn from a positive time invariant distribution $P(V_i(t) \given \pa (V_i(t)))$, where $\pa(V_i(t))$ are the parents of $V_i(t)$ in $G^1$.\footnote{For the linear Gaussian case we assume that the absolute values of all eigenvalues of the transition matrix are less than 1.} \fcomment{This set-up automatically satisfies Markov and stationarity, but of course does not imply that faithfulness is satisfied for the undersampled graph. So that assumption needs to be added in the discussion of estimation.} 

Time series data are obtained from the above process at the \emph{measurement timescale}, given by some (possibly unknown) integral sampling rate $u$. The measured time series sample $\V^t$ is at times $t, t-u, t-2u,\ldots$; we are interested in the case of $u > 1$, i.e., the case of subsampled data. A different route to subsampling would use continuous-time models as the underlying system timescale structure. However, some series (e.g., transactions such as salary payments) are inherently discrete time processes \citep{Gong2015}, and many continuous-time systems can be approximated arbitrarily closely as discrete-time processes. Thus, we focus here on discrete-time causal structures as a justifiable, and yet simple, basis for our non-parametric inference procedure.

The structure of this subsampled time series can be obtained from $G^1$ by marginalizing the intermediate time steps. Figure~\ref{fig:dynamic}c shows the measurement timescale structure $G^2$ corresponding to subsampling rate $u=2$ for the system timescale causal structure in Figure~\ref{fig:dynamic}a. Each directed edge in $G^2$ corresponds to a directed path of length 2 in $G_1$. For arbitrary $u$, the formal relationship between $G^u$ and $G^1$ edges is

\begin{center}
$V_i^{t-u} \ra V_j^t \in G^u \ \Leftrightarrow \  V_i^{t-u} \rsa V_j^{t} \in G^1$,
where $\rsa$ denotes a directed path.\footnote{We assume a type of faithfulness assumption (see~\cite{sgs1993}), such that influences along (multiple) paths between nodes do not exactly cancel in $G^u$.}
\end{center}

\begin{figure}[!t]
\centering
\vspace{-10mm}
\hspace*{1mm}\resizebox{0.65\columnwidth}{!}{$
\begin{array}{cccc}\xymatrix@R=10pt@C=20pt{
\cdots & 
*++[F-:<10pt>]{V_1^{t-2}}  \ar[rdd] \ar[r] %\ar[r]
&
 *++[F-:<10pt>]{V_1^{t-1}}  \ar[rdd] \ar[r] %\ar[r]
& 
%*++++[o][F]{X(t+1)}
*++[F-:<10pt>]{\;\;V_1^{t}\;\;}
& \cdots
\\
\\
\cdots&
*++[F-:<10pt>]{V_2^{t-2}}  \ar[rdd] %\ar[r] 
&
 *++[F-:<10pt>]{V_2^{t-1}}  \ar[rdd] %\ar[r]
 &
*++[F-:<10pt>]{\;\;V_2^{t}\;\;}   
& \cdots 
\\
\\
\cdots&
*++[F-:<10pt>]{V_3^{t-2}} \ar[ruuuu]  %\ar[ruu] %\ar[r]
&
 *++[F-:<10pt>]{V_3^{t-1}}  \ar[ruuuu] %\ar[r]
 &
*++[F-:<10pt>]{\;\;V_3^{t}\;\;}   
& \cdots
\\
}
& \xymatrix@R=10pt@C=20pt{ \\
& 
&
 *++[F-:<10pt>]{V_1} \ar@(ul,ur)[] \ar[rdddd] %\ar[r]
& 
& \\
\\
&&&&\\
\\
& *++[F-:<10pt>]{V_3}  \ar[ruuuu]   %\ar[ruu] %\ar[r]
&
 &
*++[F-:<10pt>]{V_2} \ar[ll]  
& \\
} \\ \\
\text{a) Unrolled graph } G^1 &
\text{b) Rolled graph } \cg{1} \\
\text{(system timescale)} &
\text{(system timescale)} \\ \\
 \xymatrix@R=10pt@C=20pt{
\cdots &
*++[F-:<10pt>]{V_1^{t-2}} \ar[rr]  \ar[rrdd] \ar[rrdddd] %\ar[r]
& \;\;\;\;\;\;\;\;\;\;
& 
%*++++[o][F]{X(t+1)}
*++[F-:<10pt>]{\;\;V_1^{t}\;\;}
& \cdots
\\
\\
\cdots& 
*++[F-:<10pt>]{V_2^{t-2}}  
\ar@[red]@{<->}[uu] 
\ar[rruu] %\ar[r] 
&
 &
*++[F-:<10pt>]{\;\;V_2^{t}\;\;}  \ar@[red]@{<->}[uu] 
& \cdots 
\\
\\
\cdots& 
*++[F-:<10pt>]{V_3^{t-2}} \ar[rruu] \ar[rruuuu] %\ar[r]
&
 &
*++[F-:<10pt>]{\;\;V_3^{t}\;\;}   
& \cdots
\\
 } & \xymatrix@R=10pt@C=20pt{ \\
& 
&
 *++[F-:<10pt>]{V_1} \ar@(ul,ur)[] \ar@/^1pc/[rdddd]   \ar@/^1pc/[ldddd] %\ar[r]
& 
& \\
\\
&&&&\\
\\
& *++[F-:<10pt>]{V_3}\ar[rr]   \ar@/^1pc/[ruuuu]  %\ar[ruu] %\ar[r]
&
 &
*++[F-:<10pt>]{V_2}  \ar@[red]@{<->}@/_3pc/[luuuu]  \ar@/^1pc/[luuuu] 
& }
\\ \\
 \text{c) Unrolled graph } G^2 &
 \text{d) Rolled graph } \cg{2}\\
 \text{(measurement timescale)} &
\text{(measurement timescale)}
\end{array}$}
\caption{Example graphs: a)~$G^1$, b)~$\sG^1$, c)~$G^u$, d)~$\sG^u$ with 
subsampling rate $u=2$.\label{fig:dynamic}}
\end{figure}
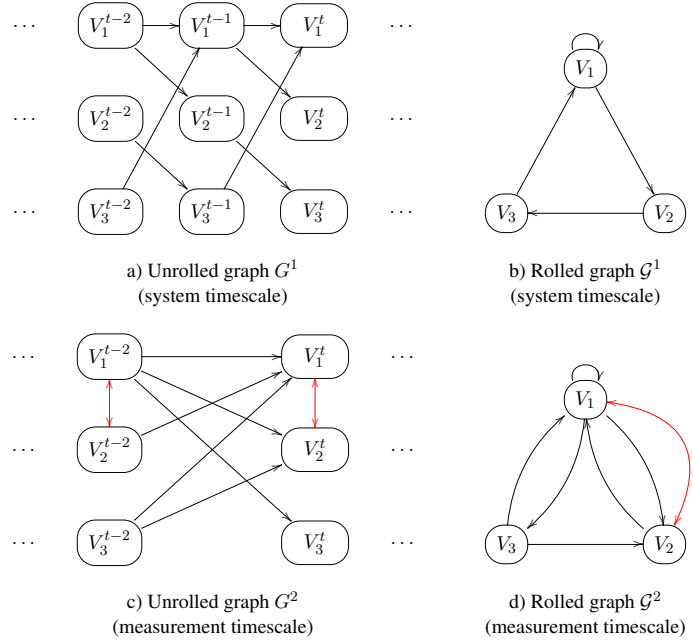

Subsampling a time series additionally induces ``direct'' dependencies between variables in the same time step \citep{wei1994time}. The bi-directed arrow $V_1^t \leftrightarrow V_2^t$ in Figure~\ref{fig:dynamic}c is an example: $V_1^{t-1}$ is an unobserved (in the data) common cause of $V_1^t$ and $V_2^t$ in $G^1$ (see Figure~\ref{fig:dynamic}a). Formally, the system timescale structure $G^{1}$ induces bi-directed edges in the measurement timescale $G^{u}$ for $i \neq j$ as follows:
\begin{eqnarray*}
V^t_i \lra V^t_j \in G^u &\Leftrightarrow& \exists(V^t_i \lsa V_c^{t-k} \rsa V_j^t) \in G^1,k<u.
\end{eqnarray*}
Just  as $\cg{1}$  represents the  rolled version  of $G^1$,  $\cg{u}$
represents the rolled  version of $G^u$: $V_i \ra V_j  \in \cg{u}$ iff
$V_i^{t-u} \ra  V_j^{t} \in  G^u$ and  $V_i \lra  V_j \in  \cg{u}$ iff
$V_i^{t} \lra V_j^{t} \in G^u$.

The relationship  between $\cg{1}$ and $\cg{u}$---that  is, the impact
of subsampling---can be concisely represented using only the rolled
graphs:
\begin{eqnarray}
V_i \ra V_j \in \cg{u} &\Leftrightarrow & V_i \overset{u}{\rsa}
V_j \in \cg{1} \label{eq1}\\
V_i \lra V_j \in \sG^u  &\Leftrightarrow&   \exists (V_i \overset{<u}{\lsa} V_c \overset{<u}{\rsa} V_j) \in \cg{1}, i\neq j \label{eq2}
\end{eqnarray}
where $\overset{u}{\rsa}$ denotes a path of length $u$ and $\overset{<u}{\rsa}$ denotes a path shorter than $u$ (of the same length on each arm of a common cause). Using the rolled graph notation, the logical encodings in Section~\ref{sec:gu2g1} are considerably simpler.

%for some $V_c^{t-k}$
%$\in \V(s)\setminus \{V_i(s), V_j(s)\}$ 
%this is incorrect???? the confounder can be either of the nodes i and j
%for $1\leq k < u$. 
%\acomment{We need to have these clearly, since this is what we are encoding!}

% \subsection{Parameter estimation for linear models}
% \scomment{If we are able to actually estimate the parameters in $\cg{1}$, then put that stuff here.}

%\subsection{Problem Definitions}

\label{sec:probdef}
\cite{Danks2013} demonstrated that, in the infinite sample limit, the causal structure $\cg{1}$ at the system timescale is in general underdetermined, even when the subsampling rate $u$ is known and small. Consequently, even when ignoring estimation errors, the most we can learn is an equivalence class of causal structures at the system timescale. We define $\cal{H}$ to be the estimated version of $\cg{u}$, a graph over $\V$ obtained or estimated at the measurement timescale (with possibly unknown $u$). Multiple $\cg{1}$ can have the same structure as $\cal{H}$ for distinct $u$, which poses a particular challenge when $u$ is unknown. If $\cal{H}$ is estimated from data, it is possible, due to statistical errors, that no $\cg{u}$ has the same structure as $\cal{H}$. With these observations, we are ready to define the computational problems focused on in this work.
\medskip

\noindent
\textbf{Task 1} \emph{Given a measurement timescale structure $\cal{H}$ (with possibly unknown $u$), infer the (equivalence class of) causal structures $\cg{1}$ consistent  with $\cal{H}$ (i.e. $\cg{u}=\cal{H}$ by Eqs.~\ref{eq1} and~\ref{eq2}).}
\medskip

%\vspace{-3mm}

\noindent
We also consider the corresponding problem when the subsampled time series is directly provided as input, rather than $\cg{u}$.
\medskip

\noindent
\textbf{Task 2} \emph{Given a dataset of measurements of $\V$ obtained at the measurement timescale (with possibly unknown $u$), infer the (equivalence class of) causal structures $\cg{1}$ (at the system timescale) that are (optimally) consistent with the data.}
\medskip

\noindent
Section~\ref{sec:gu2g1} provides a solution to Task~1, and Section~\ref{sec:data2g1} provides a solution to Task~2.

\section{Finding Consistent $\cg{1}$s} \label{sec:gu2g1}

We first focus on Task~1.  We discuss the computational complexity of the underlying decision problem,
and present a practical Boolean constraint satisfaction approach
that  empirically scales up to
 significantly larger graphs than previous
state-of-the-art algorithms.

\subsection{On Computational Complexity}

Considering the task of finding a single $\cg{1}$ 
 consistent with a given $\cal{H}$,  a variant of the associated decision problem
 is related to the NP-complete problem of finding a matrix root.

  \setlength{\topsep}{5pt}
\begin{theorem}
\label{thm:npc}
%Deciding whether there is a $\cg{1}$ that 
%is consistent with a given $\cal{H}$ without knowledge of the 
%existence of bidirected edges in $\cal{H}$
%is NP-complete for any fixed $u\ge 2$.
Deciding whether there is a $\cg{1}$ that is consistent with the  directed edges of a given $\cal{H}$ is NP-complete for any fixed $u\ge 2$.
\end{theorem}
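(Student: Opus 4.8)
The plan is to recast the question as a Boolean matrix root problem, settle membership in NP by a guess-and-check argument, and obtain hardness by reduction. Identify a rolled system-timescale graph $\cg{1}$ on $\V=\{V_1,\ldots,V_n\}$ with its Boolean adjacency matrix $\A\in\{0,1\}^{n\times n}$; since self-loops $V_i\to V_i$ are permitted and no further constraint is placed on $\cg{1}$, the matrix $\A$ ranges over \emph{all} $n\times n$ Boolean matrices. By Eq.~(\ref{eq1}), $V_i\to V_j\in\cg{u}$ holds exactly when $\cg{1}$ contains a directed walk of length exactly $u$ from $V_i$ to $V_j$ (walks in the rolled graph, with repeated vertices and self-loops allowed), i.e.\ when $(\A^u)_{ij}=1$, the power taken in the Boolean semiring ($\vee$ for addition, $\wedge$ for multiplication). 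Writing $\B$ for the $0/1$ matrix recording the directed edges of the input $\mathcal{H}$, the decision problem of the theorem becomes: \emph{does there exist a Boolean matrix $\A$ with $\A^u=\B$?} — the Boolean $u$-th-root problem on $n\times n$ matrices, which is precisely the ``matrix root'' problem alluded to above.

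\emph{Membership in NP.} A candidate $\cg{1}$ is given by its adjacency matrix $\A$, of size $O(n^2)$ and hence polynomial in the size of $\mathcal{H}$. Given such a guess one computes $\A^u$ with a constant number of Boolean matrix multiplications (since $u$ is fixed), in polynomial time, and accepts iff $\A^u=\B$; this simultaneously certifies that $\cg{u}$ and $\mathcal{H}$ agree on directed edges. Hence the problem is in NP.

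\emph{NP-hardness.} Here I would reduce from an NP-complete problem, first doing the base case $u=2$ and then lifting to every fixed $u\ge2$. For $u=2$ (Boolean matrix square root) the reduction from, say, $3$-SAT builds a digraph whose length-$2$ walks encode an assignment: each variable gets a pair of mutually exclusive ``true''/``false'' edges, and any root is forced to commit to one of them in order to realize a prescribed $1$-entry of $\B$; each clause then contributes a prescribed $0$-entry of $\B$ that cannot be realized unless at least one of its literal-edges was selected. To pass to general $u$ one inserts $u-2$ layers of ``relay'' vertices between the two ``real'' layers and pads $\B$ with a $0/1$ pattern engineered so that any $u$-th root is forced into this layered block form, whereupon its length-$u$ walks collapse to the length-$2$ walks of the base instance. (If one instead takes as given that the Boolean $u$-th-root problem is itself NP-complete for every fixed $u\ge2$, hardness is immediate from the reformulation and no gadget is needed.)

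The step I expect to be the main obstacle is exactly this hardness gadget: forcing the $0$-entries of $\B$ so that no $u$-th root can ``cheat'' by routing length-$u$ walks through unintended vertices, and making a single construction work uniformly for every fixed $u\ge2$ rather than only $u=2$. The matrix reformulation and the NP-membership argument, by contrast, are routine.
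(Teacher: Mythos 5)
Your proposal is correct and follows essentially the same route as the paper: identify directed-edge consistency with the Boolean matrix equation $\A^u=\B$, get NP membership by guess-and-check, and get hardness from the NP-completeness of the Boolean matrix $u$th-root problem. The gadget construction you flag as the main obstacle is exactly the part the paper does not do itself --- it simply cites the known result of Kutz (2004) that deciding whether a Boolean matrix has a $u$th root is NP-complete for every fixed $u\ge 2$, so your parenthetical ``hardness is immediate from the reformulation'' is the paper's actual argument.
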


\begin{proof}
Membership in NP follows from a guess and check:
guess a candidate $\cg{1}$, and deterministically check
whether the length-$u$ paths of $\cg{1}$ correspond to the edges of $\cal{H}$~\citep{PlisDY:UAI2015}.
%\acomment{Must mention that we are working with logical and/or, not modulo 2?}
For NP-hardness, for any fixed $u\ge 2$, there is a straightforward reduction from 
the NP-complete problem of determining whether a Boolean $B$ matrix has a $u$th root~\citep{kutz2004complexity}\footnote{Multiplication of two values in $\{0,1\}$ is defined as the logical-or, or equivalently, the maximum operator.} 
%$0\cdot 0 =0$ and $0\cdot 1=1\cdot 0= 1 \cdot 1=1$.}:
for a given $n \times n$ Boolean matrix $B$, interpret $B$ as the directed edge relation of
$\cal{H}$, i.e., $\cal{H}$ has the edge $(i,j)$ iff $A^u(i,j)=1$. It is then easy to see
that there is a $\cg{1}$ that is consistent with the obtained $\cal{H}$ iff 
$B=A^u$ for some binary matrix $A$ (i.e., a $u$th root of $B$).
\end{proof}

If $u$ is unknown, then membership in NP can be established in the same way 
by guessing both a candidate $\cg{1}$ and a value for $u$. 
%We take this to capture the essence of the decision problem, as often it is relatively hard to determine reliably the presences of the bi-directed edges from limited number of samples.
%Theorem 1 assumes we are ignorant of bi-directed edges.
Theorem~1 %, we needed to %assume ignorance of
ignores the possible bi-directed edges in $\cal{H}$ (whose presence/absence is also harder to determine reliably from practical sample sizes; see Section~\ref{sec:simulations}).
Knowledge of the presences and absences of 
such edges in $\cal{H}$ can %sometimes significantly 
restrict the set of candidate $\cg{1}$s. 
For example, in the special case where $\cal{H}$ is known to not contain \emph{any}
bi-directed edges, the possible $\cg{1}$s have a fairly simple structure:
%.Specifically, 
%if $\cal{H}$ does not contain any bi-directed edges, then 
in any $\cg{1}$ that is consistent with $\cal{H}$,
every node has at most one successor.\footnote{To see this,
assume $X$ has two successors, $Y$ and $Z$, s.t. $Y \neq Z$ in $\cg{1}$.  Then $\cg{u}$  will contain a bi-directed edge $Y \leftrightarrow Z$ for all $u \geq 2$, which contradicts the assumption that $\cal{H}$ has no bi-directed edges.}
Whether this knowledge can be
used to prove a more fine-grained complexity result for special cases is an open question.

\subsection{A SAT-Based Approach}

Recently, the first exact search algorithm for finding 
the $\cg{1}$s that are consistent with a given $\cal{H}$ for a known $u$ was presented by~\cite{PlisDY:UAI2015}; it represents the current state-of-the-art. Their approach implements
a specialized depth-first search procedure for the problem, with domain-specific polynomial 
time search-space pruning techniques.
As an alternative, we present here
a Boolean satisfiability based approach. 
First, we represent the problem exactly using %the
a rule-based constraint
satisfaction formalism.
%of answer set programming 
Then, for
a given input $\cal{H}$, we employ an off-the-shelf Boolean constraint satisfaction solver
for finding a $\cg{1}$ that is guaranteed to be consistent with $\cal{H}$ (if such $\cg{1}$ exists).
Our approach is not only simpler than the approach of~\cite{PlisDY:UAI2015},
but as we will show, it also significantly improves the
current state-of-the-art in runtime efficiency and scalability.

%A self-contained  overview of the  syntax and semantics of  answer set
%programming is not possible here due to the page limit.  Hence, we aim
%in  the following  to provide  an intuitive  understanding of  how our
%ASP-based approach works.  
 We use here answer set programming (ASP) as the constraint satisfaction formalism \citep{DBLP:journals/amai/Niemela99, DBLP:journals/ai/SimonsNS02, DBLP:journals/aicom/GebserKKOSS11}. 
%Answer set programming 
It offers an expressive
declarative modelling language, in terms of first-order logical rules,
for  various types  of NP-hard  search and  optimization problems.  To
solve a problem via ASP, one first needs to develop an ASP program (in
terms of ASP rules/constraints) that  models the problem at hand; that
is, the declarative rules implicitly represent the set of solutions to
the problem in  a precise fashion.  Then one or  multiple (optimal, in
case of optimization  problems) solutions to the  original problem can
be  obtained by  invoking an  off-the-shelf  ASP solver,  such as  the
state-of-the-art                                                \texttt{Clingo}
system~\citep{DBLP:journals/aicom/GebserKKOSS11}  used  in  this  work.
The  search  algorithms implemented  in the  \texttt{Clingo} system  are
extensions of state-of-the-art Boolean satisfiability and optimization
techniques
%which have  in recent years gained ground  as an efficient
%way of solving hard search and optimization problems. 
which
can today outperform even  specialized domain-specific algorithms, as we
show here.

We proceed by describing a simple ASP encoding of 
the problem of finding
a $\cg{1}$ that is consistent with a given $\cal{H}$. 
%Instead of assuming that $u$ is known, the encoding allows for 
%searching over a range of $u$s, and enumerating a solution $\cg{1}$
%using an ASP solver.
The input---the measurement timescale structure $\cal{H}$---is represented
as follows.
%\begin{itemize}
%\item 
The input predicate
\texttt{\small node/1} represents the nodes of $\cal{H}$ (and all graphs),
indexed by $1\ldots n$.
%\item  
The presence of a directed edge $X \rightarrow Y$ between nodes $X$ and $Y$
is represented using the predicate  \texttt{\small edgeh/2}
as \texttt{\small edgeh(X,Y)}. Similarly, the fact that  
an edge $X \rightarrow Y$  is not present is represented using the 
predicate 
\texttt{\small no\_edgeh/2} as 
\texttt{\small no\_edgeh(X,Y)}.
%\item 
The presence of a bidirected edge $X \leftrightarrow Y$ between nodes $X$ and $Y$
is represented using the predicate  \texttt{\small confh/2}
as  \texttt{\small confh(X,Y)} ($X<Y$), and the fact that 
an edge $X \leftrightarrow Y$  is not present is represented using the
predicate
\texttt{\small no\_confh/2} as
\texttt{\small no\_confh(X,Y)}.
%\end{itemize}

If $u$ is known, then it can be passed as input using \texttt{\small u(U)}; 
alternatively, it can be defined as a single value in a given range (here set to $1,\ldots,5$ as an example):

\begin{framed}
\footnotesize  
\begin{verbatim}                                                                                                                                              
  urange(1..5). % Define a range of u:s                                                                                                                                                    
                                                                                                                     
  1 { u(U): urange(U) } 1. % u(U) is true for only one U in the range                                                                                                                                      
\end{verbatim}
\end{framed}

Solution $\cg{1}$s  are represented via the predicate \texttt{\small edge1/2},
where \texttt{\small edge1(X,Y)} is \emph{true} iff $\cg{1}$
contains the edge $X\rightarrow Y$.
In ASP, the set of candidate solutions (i.e., the 
set of all directed graphs over $n$ nodes) over which the search for solutions is
performed, is declared via the so-called \emph{choice construct} within the following rule,
stating that candidate solutions may contain directed edges between any pair of nodes.

\begin{framed}
\footnotesize  
\begin{verbatim} { edge1(X,Y) } :- node(X), node(Y).
\end{verbatim}
\end{framed}                                                                                                                                                  
                                                                                                                                                              
The measurement timescale structure $\cg{u}$ corresponding to the candidate solution $\cg{1}$
 is represented using the predicates \texttt{\small edgeu(X,Y)} and \texttt{\small confu(X,Y)}, which are derived in the following way.                                           
First,                               
we declare the mapping from a given $\cg{1}$ to the corresponding $\cg{u}$ by 
declaring the exact length-$L$ paths in a non-deterministically chosen
candidate solution $\cg{1}$.
For this, we declare rules that compute  the length-$L$ paths
inductively for all $L\le U$,
 using the %auxiliary 
 predicate \texttt{\small path(X,Y,L)} to represent
%the fact 
that there is a length-$L$ path from $X$ to $Y$.

\begin{framed}                                                                                                                                                
\footnotesize                                                                                                                                                          
\begin{verbatim}  
  % Derive all directed paths up to length U   
  path(X,Y,1) :- edge1(X,Y).   
  path(X,Y,L) :- path(X,Z,L-1), edge1(Z,Y), L <= U, u(U).
\end{verbatim}                                                                                                                                                
\end{framed}                                             

Second, to obtain $\cg{u}$, 
we encode Equations~\ref{eq1} and~\ref{eq2}
with the following rules
that form predicates \texttt{\small edgeu/2} and \texttt{\small confu/2} describing the edges $\cg{1}$
induces on the measurement timescale structure.
               
\begin{framed}                                                                                                                                                
\footnotesize                                                                                                                                                      
\begin{verbatim}  % Paths of length U, correspond to measurement timescale edges                                                                                                                               
  edgeu(X,Y) :- path(X,Y,L), u(L).

  % Paths of equal length (<U) from a single node result in bi-directed edges
  confu(X,Y) :- path(Z,X,L), path(Z,Y,L), node(X;Y;Z), X < Y, L < U, u(U).
\end{verbatim}                                                                                                                                                
\end{framed}                                                                                                                                                  

Finally, we declare constraints that require that
the $\cg{u}$ represented by the
 \texttt{\small edgeu/2} and \texttt{\small confu/2} predicates 
is consistent with the input $\cal{H}$.                
This is achieved with the following rules, which enforce that
the edge relations of $\cg{u}$ and $\cal{H}$
are exactly the same for any solution $\cg{1}$.

\begin{framed}                                                                                                                                                
\footnotesize                                                                                                                                                          
\begin{verbatim}                                                                                                                                              
  :- edgeh(X,Y), not edgeu(X,Y).
  :- no_edgeh(X,Y), edgeu(X,Y).
  :- confh(X,Y), not confu(X,Y).
  :- no_confh(X,Y), confu(X,Y).
\end{verbatim}                                                                                                                                                
\end{framed}   

\noindent
Our ASP encoding of Task~1 consists of the rules just described.
The set of solutions of the encoding correspond exactly to the 
$\cg{1}$s  consistent with the input $\cal{H}$. % given as input to the encoding.

\begin{figure}[t]
\vspace{-5mm}
\begin{center}
 \includegraphics[width=0.4\columnwidth, bb= -1 -1 406 372]{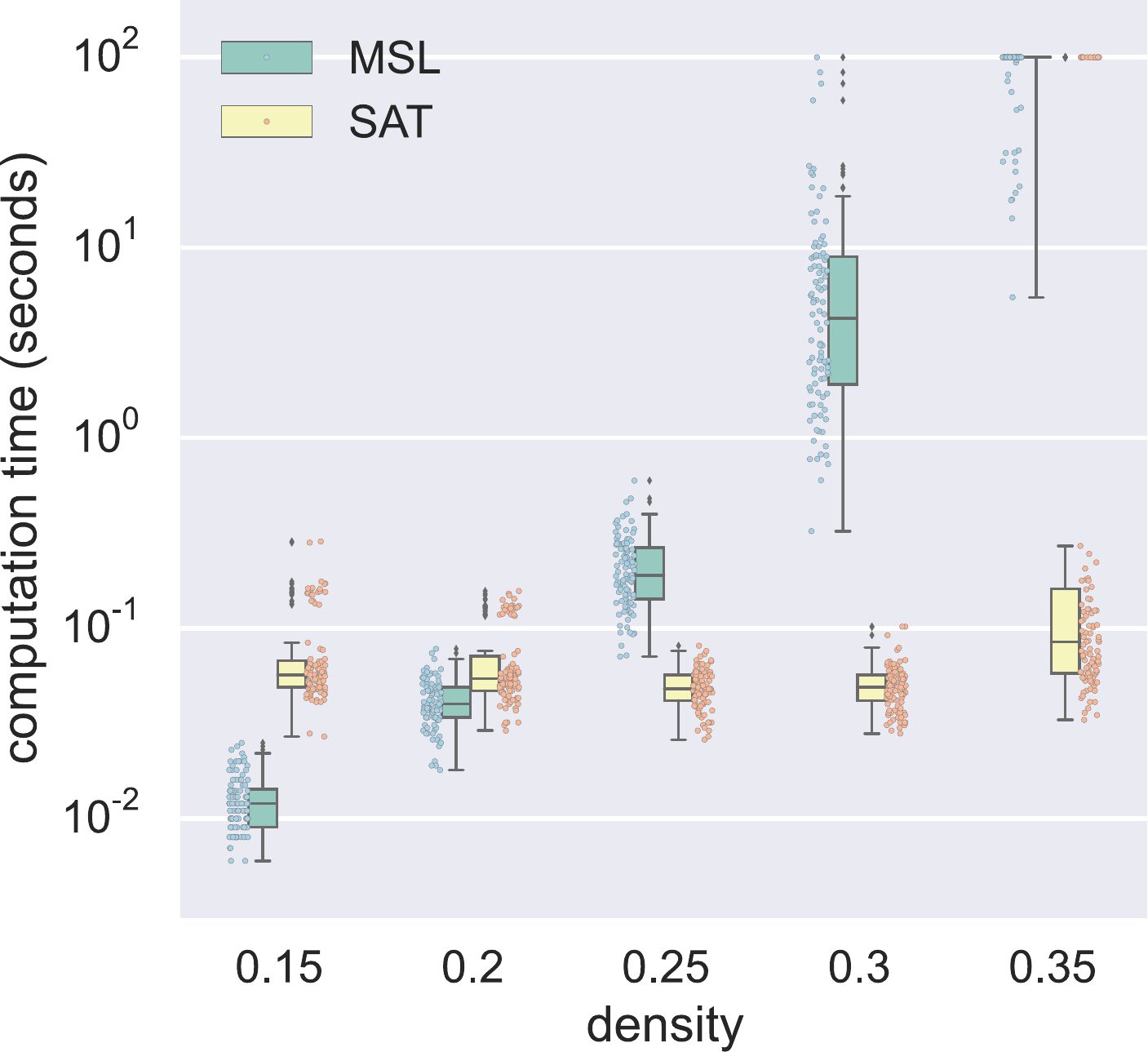}\quad\includegraphics[width=0.4\columnwidth, bb=-1 -1 406 383]{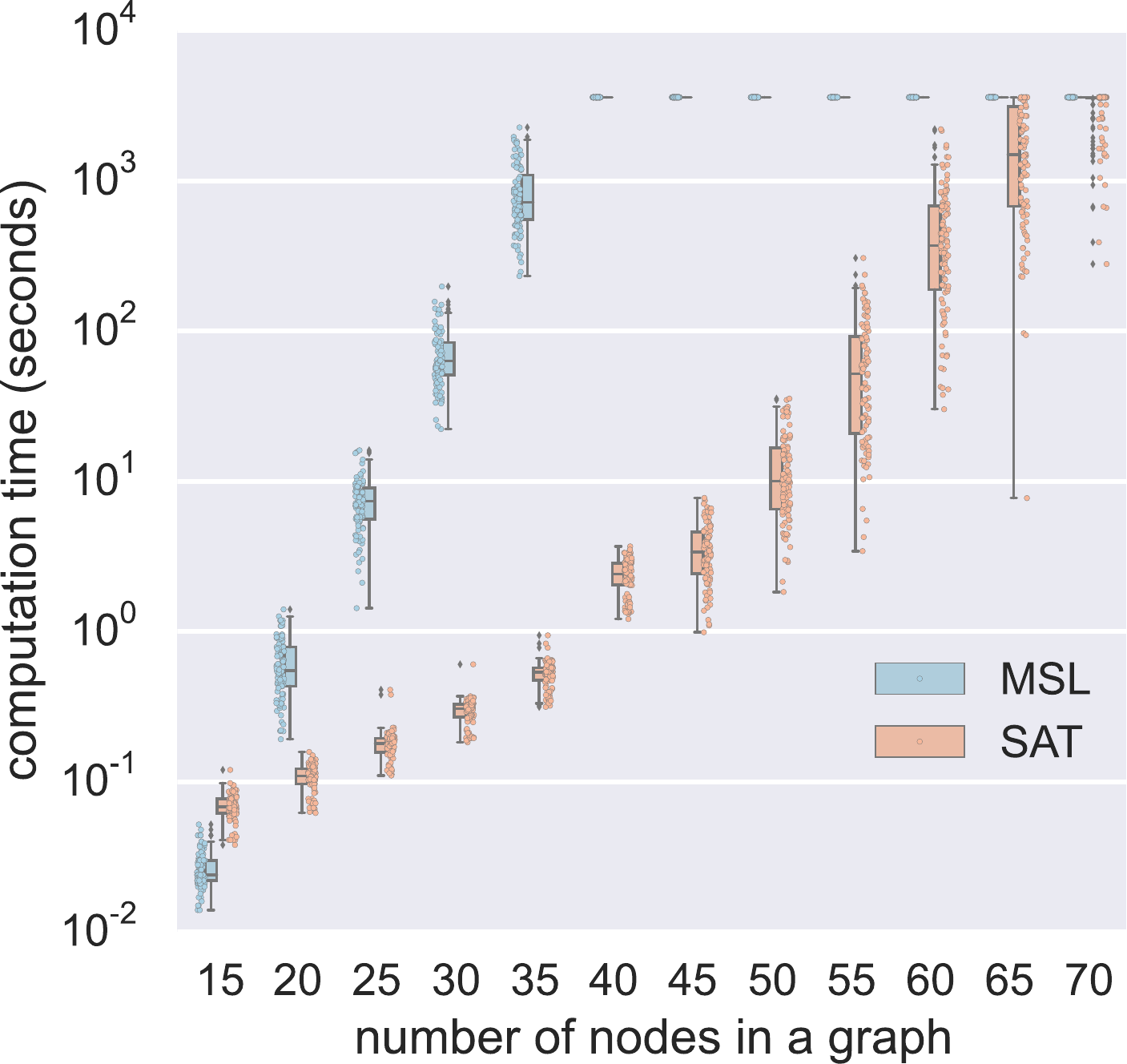}
\end{center}
\vspace{-15pt}
\caption{ Running times. Left: for 10-node  graphs as  a  function of graph density (100 graphs  per density and a timeout of
  100 seconds); Right: for 10\%-dense graphs as a
  function of  graph size (100 graphs  per density and a  timeout of 1
  hour). }
\label{fig:scalability-density}
\label{fig:scalability-nodes}
\vspace{-5mm}
\end{figure}

\subsection{Runtime Comparison}

Both our proposed SAT-based approach and the 
recent specialized search algorithm MSL~\citep{PlisDY:UAI2015}
are correct and complete, so we focus on differences in efficiency,  using the implementation of MSL by the original authors.
Our approach allows for searching simultaneously over a range of values of $u$,
but %the approach in~
\cite{PlisDY:UAI2015} focused on the case $u=2$;
hence, we restrict the comparison to $u=2$.

We simulated system timescale graphs with varying density and number of nodes (see Section~\ref{sec:modelgeneration} for exact details), and
%on the graph structure sampling procedure
then generated the measurement timescale structures for subsampling rate $u=2$. This structure was given as input to the inference procedures. Note that the input consisted here of graphs for which there always is a $\cg{1}$, so all instances were satisfiable. The task of the algorithms was to output up to  1000 (system timescale) graphs in the equivalence 
class. The ASP encoding was solved by \texttt{Clingo} using the flag \texttt{-n 1000} for the solver to enumerate  1000 solution graphs (or all,  in cases where there were less than 1000 solutions).

\begin{figure}[b]
\vspace{-3mm}
\begin{center}
 \includegraphics[width=0.45\columnwidth, bb= 0 0 426 286]{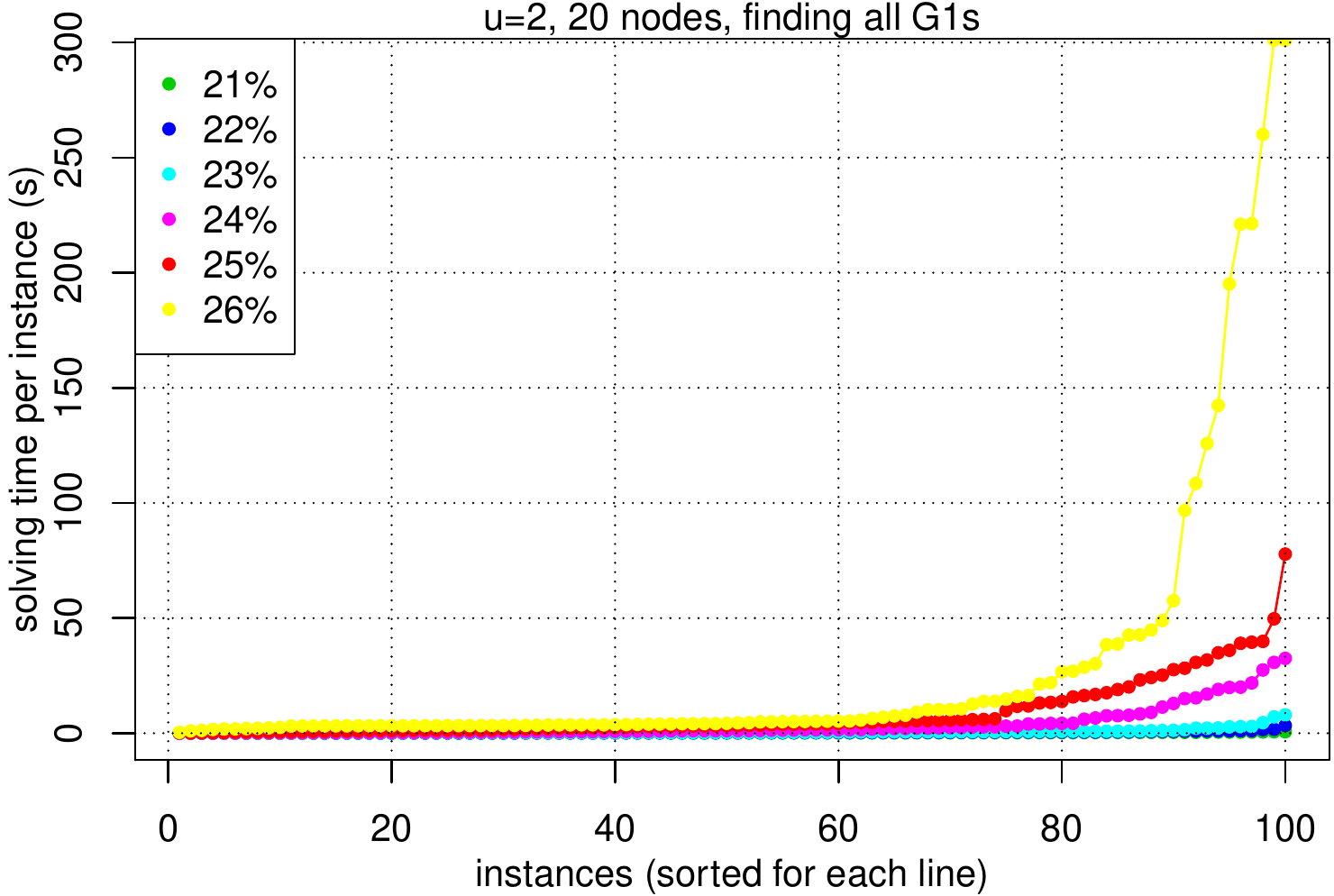}\quad\includegraphics[width=0.45\columnwidth,bb= 0 0 426 286]{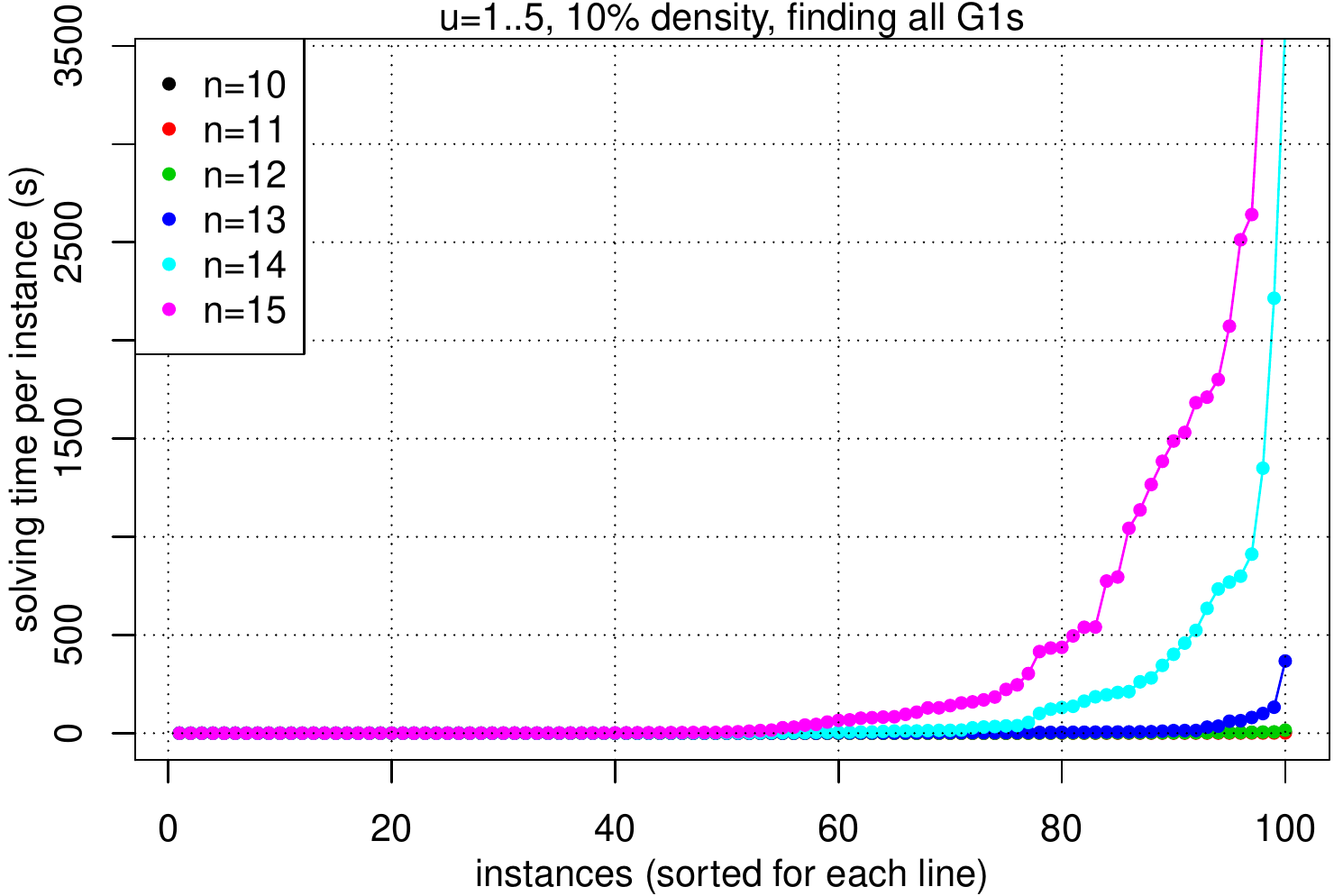}
\end{center}
\vspace{-.4cm}
\caption{Left: Influence of input graph density on running times of our approach. Right: Scalability of our approach when enumerating all solutions over %the range                          
$u=1,\ldots,5$.}
\label{fig:densities}\label{fig:u}
\vspace{-.4cm}
\end{figure}       
                                                                                      
The running times of the                                                                                     
MSL algorithm and our approach (SAT) on 10-node input graphs with different edge densities
are shown in Figure~\ref{fig:scalability-density}.
Figure~\ref{fig:scalability-nodes} (right) shows
the scalability of the two approaches in terms of increasing number of nodes in the input graphs and fixed 10\% edge density.
Our declarative approach clearly outperforms MSL.
10-node input graphs, regardless of edge density, are essentially trivial for our approach, while 
the performance 
of MSL deteriorates noticeably as the density increases.
For varying numbers of nodes in 10\% density input graphs, 
our approach scales
up to 65 nodes with a one hour time limit; even for 70 nodes, 25 graphs finished in one hour. In contrast, MSL reaches only 35 nodes; our approach  uses only a few seconds for those graphs.
The scalability of our algorithm  allows for investigating the influence of edge
density for larger graphs. 
Figure~\ref{fig:densities} (left) plots the running times of our approach (when enumerating \emph{all} solutions)
for $u=2$ on  20-node input graphs of varying densities. Finally, Figure~\ref{fig:u} (right) shows the scalability  of our approach in the more challenging task
of enumerating \emph{all} solutions over the \emph{range} $u=1,\ldots,5$ 
simultaneously.
%\footnote{In general, it is infeasible for our
  %approach to consider all \emph{possible} $u$ %for a given $\cal{H}$.} 
  This also demonstrates
the generality of our approach:
%, as 
it is not restricted to solving %the problem
for individual values of $u$ separately.

\section{Learning from Undersampled Data}\label{sec:data2g1}

Due to statistical errors in estimating
$\cal{H}$ and the sparse distribution of $\cg{u}$ in ``graph space'', there will often be \emph{no} $\cg{1}$s that are consistent with $\cal{H}$. 
% In reality, a frequently occurring phenomenon is that, due to statistical errors in the process of obtaining 
% $\cg{u}$, there are no $\cg{1}$s that is consistent with $\cg{u}$. 
Given such an $\cal{H}$,
neither the MSL algorithm nor our approach in the previous section can output a solution, and they simply conclude that no solution $\cg{1}$ exists for the input
$\cal{H}$.
In terms of our constraint declarations, this is witnessed by conflicts among the constraints
for any possible solution candidate.
Given the inevitability of statistical errors, 
we should not simply conclude that no consistent $\cg{1}$ exists for such an $\cal{H}$.
Rather, we should aim to learn $\cg{1}$s that, in light of the underlying conflicts, are ``optimally close" (in some well-defined sense of ``optimality") to being consistent with $\cal{H}$. 
We now  turn to this more general problem setting, and propose what (to the best of our knowledge) is the 
first approach to learning, by employing constraint optimization, from undersampled data under conflicts.
In fact, we can use the ASP formulation already discussed---with minor modifications---to address this problem.

In this more general setting, the input consists of both the estimated graph $\cal{H}$, and also (i) weights $w(e\in \cal{H})$ indicating the reliability of edges present in $\cal{H}$; and (ii) weights $w(e\not \in \cal{H})$ indicating the reliability of edges absent in $\cal{H}$. 
%Let $(\cg{1})^u$ denote the measurement time scale graph corresponding to a given $\cg{1}$ and subsampling rate $u$. 
Since $\cg{u}$ is $\cg{1}$ subsampled by $u$, the task is to find a $\cg{1}$ that minimizes the objective function:
\begin{eqnarray*}                                                                                                                                                                                       
f(\cg{1},u) &=&\sum_{e\in {\cal{H}}} I[e \not\in \cg{u}] \cdot w(e\in {\cal{H}}
)+
 \sum_{e \not\in {\cal{H}}} I[e \in \cg{u}] \cdot w(e\not\in {\cal{H}}),                                                                                                                                             
\end{eqnarray*}  
where the indicator function $I(c)=1$ if the condition $c$ holds, and $I(c)=0$ otherwise.
Thus, edges that differ between the estimated input $\cal{H}$ and the $\cg{u}$ corresponding to the solution $\cg{1}$
 are penalized by the weights representing the reliability of the measurement timescale estimates.                                                                                                            In the following, we first outline how the ASP encoding for the search problem without optimization
is easily generalized to enable finding optimal $\cg{1}$ with respect to this objective function.
We then describe %multiple 
alternatives for determining the 
weights $w$, and present simulation results on the relative performance of the different weighting schemes.

\subsection{Learning by Constraint Optimization}

To model the objective function for handling conflicts,
only simple modifications are needed to our ASP encoding:
instead of declaring \emph{hard} constraints that require that the paths induced by $\cg{1}$ \emph{exactly} correspond to the edges in $\cal{H}$,
we \emph{soften} these constraints by declaring that the violation of each individual constraint incurs
the associated weight as penalty. 
In the ASP language, this can be expressed by
augmenting the input predicates \texttt{\small edgeh(X,Y)}
with weights: \texttt{\small edgeh(X,Y,W)} (and similarly for \texttt{\small no\_edgeh}, \texttt{\small confh} and \texttt{\small no\_confh}).
Here the additional argument $W$ represents the weight $w((x \rightarrow y) \in {\cal{H}})$ given as input. The following expresses that each conflicting presence of an edge in $\mathcal{H}$ and $\cg{u}$ is penalized with the associated weight $W$.

\pagebreak

\vspace*{-10mm}

\begin{framed}
\footnotesize
\begin{verbatim}
 :~ edgeh(X,Y,W), not edgeu(X,Y). [W,X,Y,1]
 :~ no_edgeh(X,Y,W), edgeu(X,Y). [W,X,Y,1]
 :~ confh(X,Y,W), not confu(X,Y). [W,X,Y,2]
 :~ no_confh(X,Y,W), confu(X,Y). [W,X,Y,2]
\end{verbatim}  
\end{framed}             

\noindent
This modification provides an ASP encoding for Task~2; that is, the
optimal solutions to this ASP encoding correspond exactly to the $\cg{1}$s that minimize the objective function
$f(\cg{1},u)$ for any $u$ and input $\cal{H}$ with weighted edges.

\subsection{Weighting Schemes}

We use two different schemes for weighting the presences and absences of edges in $\cal{H}$ according to their reliability. To determine the presence/absence of an edge $X \rightarrow Y$ in $\cal{H}$ we simply test the corresponding independence $X^{t-1} \independent Y^t \given \mathbf{V}^{t-1}\setminus X^{t-1}$.
To determine the presence/absence of an edge $X \leftrightarrow Y$ in $\cal{H}$, we run the independence test: $X^t\independent Y^t \given \mathbf{V}^{t-1}$.

The simplest approach is to use uniform weights on the estimation result of $\cal{H}$:
\begin{eqnarray*}
w(e \in \cal{H}) &=& 1  \quad \forall e \in \cal{H},\\
w(e \not\in \cal{H}) &=& 1 \quad \forall e \not\in \cal{H}.
\end{eqnarray*}
Uniform edge weights resemble the search on the Hamming cube of $\cal{H}$ that \cite{PlisDY:UAI2015} used to address the problem of finding $\cg{1}$s when $\cal{H}$ did not correspond to any $\cg{u}$.

A more intricate approach is to use pseudo-Boolean weights following   \cite{hej2014,chain,Margaritis}.
They used Bayesian model selection to obtain reliability weights for independence tests. Instead of a $p$-value and a binary decision, these types of tests give a measurement of reliability for an independence/dependence statement as a Bayesian probability. We can directly use their approach of attaching log-probabilities as the reliability weights for the edges. For details, see Section 4.3 of \cite{hej2014}. Again, we only compute weights for the independence tests mentioned above in the estimation of $\cal{H}$.

\begin{figure}[t]
\vspace{-7mm}
\begin{center}
 \includegraphics[bb=0 0 714 430,width=1.0\textwidth]{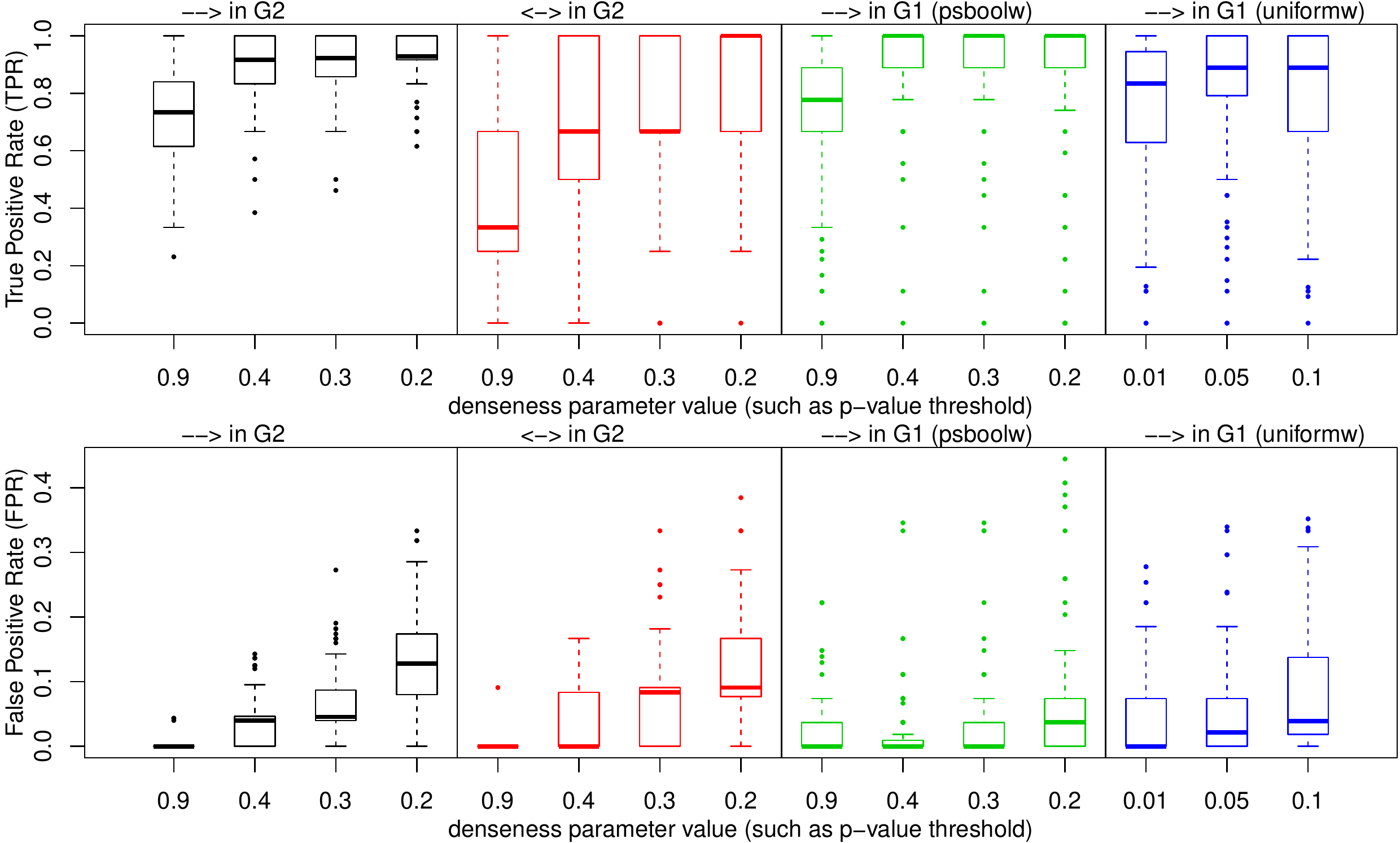}
\vspace{-1cm}
\end{center}
\caption{Accuracy of the optimal solutions with different weighting schemes and parameters (on x-axis). See text for further details. }\label{fig:accuracy}
\vspace{-.4cm}
\end{figure}

\subsection{Simulations}
\label{sec:simulations}

We use simulations to explore the impact of the choice of weighting schemes on the accuracy and runtime efficiency of our approach.
%\subsubsection{Model Generation} 
\label{sec:modelgeneration}
For the simulations, system timescale structures $\cg{1}$ and the associated data generating models were constructed in the following way. To guarantee connectedness of the graphs, we first formed a cycle of all nodes in a random order (following \cite{PlisDY:UAI2015}). We then randomly sampled additional directed edges until the required density was obtained. Recall that there are no bidirected edges in $\cg{1}$. We used Equations~\ref{eq1} and~\ref{eq2} to generate the measurement timescale structure $\cg{u}$ for a given $u$. When sample data were required, we used linear Gaussian structural autoregressive processes (order 1) with structure $\cg{1}$ to generate data at the system timescale, where coefficients were sampled from the two intervals $\pm [0.2, 0.8]$. We then discarded intermediate samples to get the particular subsampling rate.\footnote{%The ASP encoding was solved by \texttt{Clingo} using the flags \texttt{--opt-mode=optN -n 1000} for the solver to enumerate  1000 optimal solution graphs (or all,  if there were fewer than 1000 optimal solutions). 
\texttt{Clingo} only accepts integer weights; we  multiplied weights by 1000 and rounded to the nearest integer.}

%\subsubsection{Results}
Figure~\ref{fig:accuracy} shows the accuracy of the different methods in one setting: subsampling rate $u=2$, network size $n=6$, average degree 3, sample size $N=200$, and 100 data sets in total. The positive predictions correspond to presences of edges; when the method returned several members in the equivalence class, we used mean solution accuracy to measure the output accuracy.
The x-axis numbers correspond to the adjustment parameters for the statistical independence tests ($p$-value threshold for uniform weights, prior probability of independence for all others). The two left columns (black and red) show the true positive rate and false positive rate of $\cal{H}$ estimation (compared to the true $\cg{2}$), for the different types of edges, using different statistical tests. 
%, in terms of FPR and TPR they match exactly the frequentist tests
For estimation from 200 samples, we see that the structure of $\cg{2}$ can be estimated with good tradeoff of TPR and FPR with the middle parameter values, but not perfectly. The presence of directed edges can be estimated more accurately. 
More importantly, the two rightmost columns in Figure~\ref{fig:accuracy} (green and blue) show the accuracy of $\cg{1}$ estimation. Both weighting schemes produce good accuracy for the middle parameter values, although there are some outliers. The pseudo-Boolean weighting scheme still outperforms the uniform weighting scheme, as it produces high TPR with low FPR for a range of threshold parameter values (especially for $0.4$).

Finally, the running times of our approach are 
shown in Figure~\ref{fig:weighting-scalability} with different weighting schemes, network sizes ($n$), and sample sizes ($N$). The subsampling rate was again fixed to $u=2$, and average node degree was 3. The independence test threshold used here corresponds to the accuracy-optimal parameters in Figure~\ref{fig:accuracy}. 
The pseudo-Boolean weighting scheme allows for much faster solving: for $n=7$, it finishes all runs in a few seconds (black line), while the uniform weighting scheme (red line) takes tens of minutes. Thus, the pseudo-Boolean weighting scheme provides the best performance in terms of both computational efficiency and accuracy.
Second, the sample size has a significant effect on the running times: larger sample sizes take \emph{less} time. For $n=9$ runs, $N=200$ samples (blue line) take longer than $N=500$ (cyan line). Intuitively, statistical tests should be more accurate with larger sample sizes, resulting in fewer conflicting constraints. For $N=1000$, the  global optimum is found here for up to 12-node graphs, though in a considerable amount of time.

\section{Conclusion}

\begin{figure}[!t]
\centering
 \vspace{-7mm}
 \includegraphics[width=0.58\columnwidth, bb=0 0 426 274]{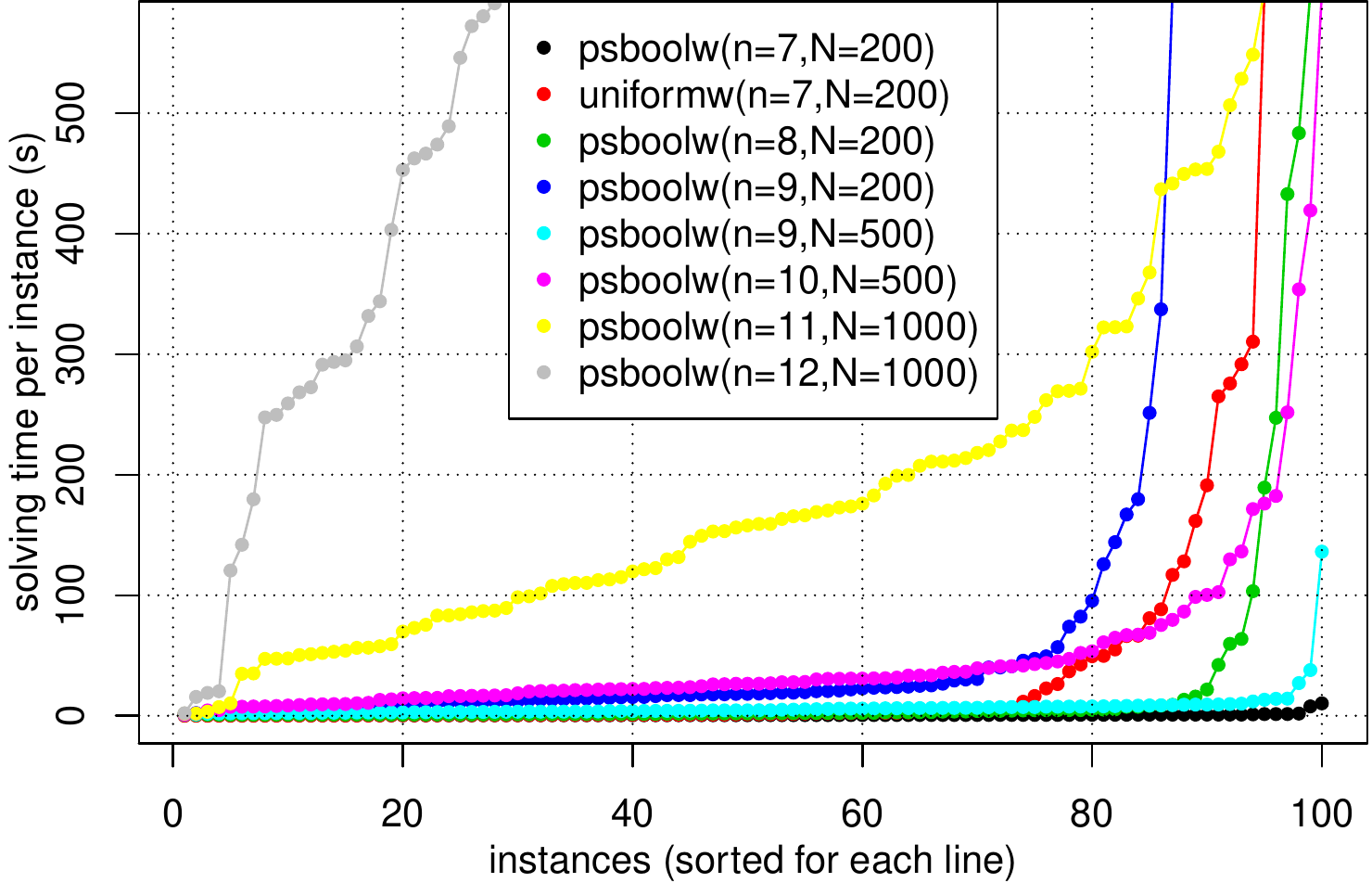}
 \vspace{-3mm}
\caption{Scalability of our %constrained optimization
approach under different settings.}\label{fig:weighting-scalability}
\vspace{-4mm}
\end{figure}

In this paper, we introduced a constraint optimization based solution for the problem of learning causal timescale structures from subsampled measurement timescale graphs and data. Our approach considerably improves the state-of-art% for this causal structure learning
; in the simplest case (subsampling rate $u=2$), we extended the scalability by several orders of magnitude. %\cite{PlisDY:UAI2015,Plis2015a}
Moreover, our method generalizes to handle different or unknown subsampling rates in a computationally efficient manner. Unlike previous methods, our method can operate directly on finite sample input, and we presented approaches that recover, in an optimal way, from conflicts arising from statistical errors.
We expect that this considerably simpler approach will allow for the relaxation of additional model space assumptions in the future. In particular, we plan to use this framework to learn the system timescale causal structure from subsampled data when latent time series confound our observations.

\acks{AH was supported by Academy of Finland Centre of Excellence in Computational Inference Research COIN (grant 251170).
SP was supported by NSF IIS-1318759 \& NIH R01EB005846.
MJ was supported by Academy of Finland Centre of Excellence in Computational Inference Research COIN (grant 251170) and 
grants 276412, 284591; and Research Funds of the University of Helsinki. FE was supported by NSF 1564330.
DD was supported by NSF IIS-1318815 \&  NIH  U54HG008540  (from  the  National  Human  Genome Research Institute through funds provided by the trans-NIH Big Data to Knowledge (BD2K) initiative). The content is solely the responsibility of the authors  and does not necessarily represent the official views of the National Institutes of Health.}

\bibliography{sat4u}

\end{document}